\definecolor{mygray}{rgb}{200,200,200}
\newcommand*{\rom}[1]{\expandafter\@slowromancap\romannumeral #1@}
\theoremstyle{plain}
\newtheorem{theorem}{Theorem}[section]
\newtheorem{lemma}[theorem]{Lemma}
\theoremstyle{definition}
\newtheorem{definition}[theorem]{Definition}
\theoremstyle{remark}
\def\eqref#1{equation~\ref{#1}}
\def\1{\bm{1}}
\DeclareMathAlphabet{\mathsfit}{\encodingdefault}{\sfdefault}{m}{sl}
\SetMathAlphabet{\mathsfit}{bold}{\encodingdefault}{\sfdefault}{bx}{n}
\title{\huge Personalized Federated Training of Diffusion Models with Privacy Guarantees}
\author{
  Kumar Kshitij Patel\\
  TTIC\\
  \texttt{kkpatel@ttic.edu} \\
    \and Weitong Zhang\\
  UNC Chapel Hill\\
  \texttt{weitongz@unc.edu}
  \and Lingxiao Wang\\
  NJIT\\
  \texttt{lw324@njit.edu}
}
\date{}
\begin{document}
\maketitle

\begin{abstract}
The scarcity of accessible, compliant, and ethically sourced data presents a considerable challenge to the adoption of artificial intelligence (AI) in sensitive fields like healthcare, finance, and biomedical research. Furthermore, access to unrestricted public datasets is increasingly constrained due to rising concerns over privacy, copyright, and competition. Synthetic data has emerged as a promising alternative, and diffusion models---a cutting-edge generative AI technology---provide an effective solution for generating high-quality and diverse synthetic data. In this paper, we introduce a novel federated learning framework for training diffusion models on decentralized private datasets. Our framework leverages personalization and the inherent noise in the forward diffusion process to produce high-quality samples while ensuring robust differential privacy guarantees. Our experiments show that our framework outperforms non-collaborative training methods, particularly in settings with high data heterogeneity, and effectively reduces biases and imbalances in synthetic data, resulting in fairer downstream models.
\end{abstract}

\section{Introduction}\label{sec:intro}
\renewcommand{\thefootnote}{\fnsymbol{footnote}}
\footnotetext{Preliminary results.}
The proliferation of foundation models has the potential to revolutionize artificial intelligence (AI), driving advancements in applications such as healthcare diagnostics, financial systems, scientific discovery, and large-scale automation~\citep{bommasani2021opportunities,kapoor2024societal}. However, their success depends heavily on access to large-scale, high-quality datasets~\citep{kaplan2020scaling}---a dependency increasingly being challenged by two major issues. First, the \textbf{decentralization of data} and tightening legal and regulatory restrictions, including stricter copyright protections, equitable compensation demands, and privacy laws, are limiting data accessibility~\citep{gdpr,cpra,grynbaum2023times,wang2023generative}. Consequently, the era of openly available and loosely regulated public data is seemingly coming to an end~\citep{villalobosposition}. Second, domains such as healthcare and drug discovery face an inherent \textbf{scarcity of data}, as rare medical conditions and the high cost of testing unique chemical compounds on humans make it infeasible to collect large, diverse datasets~\citep{ching2018opportunities,rieke2020future,clark2021machinelearning,prakash2023chemicalspace}. These challenges amplify issues of bias and fairness, as models trained on limited or imbalanced datasets often fail to generalize to underrepresented populations or critical edge cases. 
Together, they render traditional data collection and centralized training pipelines impractical, underscoring the need for innovative approaches to data generation and collaborative model training.

Federated learning provides a promising framework for addressing the challenges of decentralized, siloed datasets by enabling multiple institutions or clients to collaboratively train a shared model without exchanging raw data~\citep{mcmahan2016communication,mcmahan2016federated,kairouz2021practical}. This approach overcomes privacy concerns and legal barriers, making it particularly valuable in regulated domains. However, federated learning alone cannot solve the problem of data scarcity. Diffusion models~\citep{ho2020denoising,sohl2015deep}, a cutting-edge generative AI technology, offer a complementary solution by generating diverse and realistic data samples from complex distributions. Their ability to model intricate variability within datasets makes them a powerful tool for addressing data scarcity, especially in fields where obtaining large, representative datasets is inherently difficult.

Thus, combining federated learning with generative modeling represents a natural step forward, enabling collaborative training of generative models across decentralized datasets while addressing both data accessibility and scarcity challenges.

Current approaches for federated training of diffusion models~\citep{vora2024feddm,de2024training} focus on training a single global model using only client gradients, which leverages decentralized datasets and avoids the exchange of raw data. However, this protocol does not inherently guarantee privacy, as diffusion models are prone to memorizing their training data~\citep{chou2023backdoor,carlini2023extracting,pang2023white}, and standard privacy-preserving techniques, such as differentially private (DP) training of diffusion models, have yet to be effectively applied, increasing the risk of exposing sensitive information.
Furthermore, a single global model trained on the mixture of all client distributions enables any client (or external user) to generate synthetic data that reflects patterns specific to other clients, which is not ideal for scenarios with competitive pressures. Thus, addressing these challenges necessitates both mitigating \textbf{memorization risks} and introducing mechanisms for \textbf{client-specific control} over synthetic data generation. Without such guarantees, the applicability of federated diffusion models in sensitive domains such as healthcare remains limited.

\begin{figure}[!h]
  \begin{center}
    \includegraphics[width=0.6\textwidth]{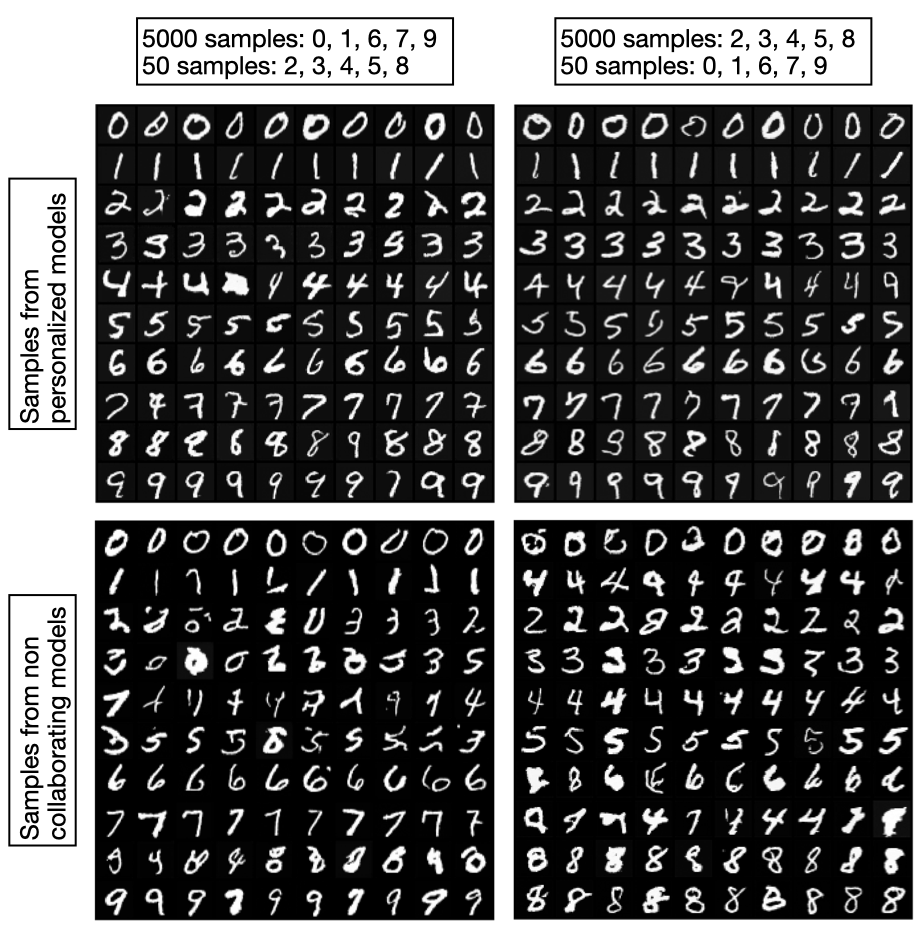}
  \end{center}
  \label{fig:MNIST}
  \caption{\textbf{The effect of data heterogeneity on generation.} We divide the MNIST dataset into two clusters, each containing five classes. We then construct two distinct datasets, each comprising 5,000 samples from one cluster and 50 from the other, with the majority-minority clusters reversed between datasets. Our goal is to perform conditional image generation for all classes. We explore two training approaches: (1) training a single diffusion model independently on each dataset (second row) and (2) using our personalized framework, which trains two models per dataset---a shared global model and a personalized local model (first row). We observe that models trained without collaboration perform significantly worse on the minority classes and often generate majority-class images even when prompted with a minority-class label. For example, the model confuses a 4 with a 7 in the bottom-left figure. In contrast, the personalized framework produces higher-quality images for the minority classes, demonstrating that the global model captures and transfers shared features across datasets. We discuss privacy guarantees in Section \ref{sec:result}.}
\end{figure}

\paragraph{Our Contribution.} We propose a novel methodology for collaboratively training diffusion models across decentralized, private datasets that addresses the above limitations. Our approach uses personalization by splitting the reverse diffusion process (a.k.a. de-noising) into client-specific and global components. The client diffusion model learns to map noisy client images to clean ones, while the global diffusion model maps standard Gaussian noise to a mixture of these noisy client images. As a result, the global model only ever processes noisy images from the clients, ensuring that sensitive information remains obfuscated and naturally mitigating memorization risks. This separation enables clients to maintain precise control over the synthetic data generation process, tailoring it to their needs while ensuring the global model reveals no sensitive patterns. Furthermore, the global model captures higher-level generalizable features across the datasets, which helps mitigate bias in the synthetic data due to imbalance in the clients' datasets.  

We formalize our privacy guarantee as a function of the diffusion process parameters and the degree of personalization in Theorem \ref{thm:privacy}. Our guarantee offers a differential privacy (DP) guarantee for each client's data. Through experiments on CIFAR-10 and MNIST (e.g., Figure \ref{fig:MNIST}), we show that our method generates images comparable to centralized training, i.e., using the combined dataset across all clients (the non-private extreme). 
Additionally, we show that our method significantly outperforms models trained on individual client datasets (the private extreme) by varying dataset sizes and data heterogeneity across clients.

\section{Related Work}\label{sec:related}
\paragraph{Image Generation and Diffusion Models.} 
Image generation is a fundamental problem in machine learning, with applications from art and animation to medical imaging and molecular design. The task involves synthesizing realistic images from underlying data distributions, and as such, a variety of techniques have been developed over time. Explicit density estimation methods, such as autoregressive models like \textbf{PixelRNN} and \textbf{PixelCNN}~\citep{van2016pixel, van2016conditional}, model the joint distribution of pixel values but often suffer from slow sampling and limited scalability. \textbf{Variational Autoencoders (VAEs)}~\citep{kingma2013auto,van2017neural} introduced latent-variable modeling, balancing reconstruction, and regularity but often produced blurry samples due to the smoothing effect of their approximations. \textbf{Generative Adversarial Networks (GANs)}~\citep{goodfellow2014generative} tackle this issue with adversarial training, achieving state-of-the-art sample quality, but at the cost of training instability.

Recent advancements in score-based and diffusion probabilistic models have emerged as robust alternatives. Diffusion models, particularly \textbf{Denoising Diffusion Probabilistic Models (DDPM)}~\citep{ho2020denoising}, define a generative process as a sequence of denoising steps that reverse a forward process where noise is incrementally added to data. Unlike GANs, DDPMs are stable to train and provide a principled probabilistic framework, allowing them to surpass GANs in image generation quality~\citep{dhariwal2021diffusion}. Extensions of DDPMs have further enhanced scalability and sample efficiency, enabling their application to high-resolution synthesis, inpainting, and conditional generation~\citep{song2020improved, nichol2021improved}.

In this work, we focus on DDPMs due to their simplicity and foundational role within the broader class of diffusion models. While our methodology is demonstrated with DDPMs, the core ideas extend to more advanced diffusion-based models. For an introduction to the algorithmic foundation of DDPMs, we refer readers to Section~\ref{sec:pre}, and for a broader overview of diffusion models, we recommend existing surveys~\citep{karras2022elucidating, nakkiran2024step}.

\paragraph{Federated Learning of Diffusion Models.} Federated learning (FL) was introduced to enable collaborative training on decentralized datasets while preserving privacy by avoiding the exchange of raw data~\citep{mcmahan2016communication,mcmahan2016federated,kairouz2021practical}, primarily in response to data regulations such as GDPR~\citep{gdpr}. FL has been widely adopted in domains like healthcare, research, finance, and mobile technologies~\citep{bergen2012genome,dayan2021federated,li2019privacy,roth2020federated,wang2023scientific,chen2020fl,kaissis2021end,shiffman_zarate_deshpande_yeluri_peiravi_2021,paulik2021federated}. The standard FL protocol involves exchanging gradients between clients and the server, which provides a degree of privacy but cannot prevent gradient-based~\citep{wang2023reconstructing} or data regeneration attacks---a known vulnerability of diffusion models~\citep{carlini2023extracting}.

While early FL research primarily focused on predictive models, recent efforts have explored its application to generative models, particularly diffusion models. Existing works in this space can be broadly categorized into those that apply standard FL techniques to train a single global diffusion model and those that leverage diffusion models as an auxiliary tool for specific FL applications. Only the first category is relevant to our work, but we also survey the latter for completeness.

Most existing works in the first category adopt standard FL methods, such as Federated Averaging (FedAvg), to train a shared diffusion model across clients. \citet{de2024training} proposed a framework that extends FedAvg to train DDPMs, demonstrating that a global model can achieve image quality close to centralized training while reducing communication costs. \citet{vora2024feddm} introduced FedDM, which enhances communication efficiency and robustness to heterogeneous data distributions by incorporating quantization and proximal regularization techniques. \citet{lai2024demand} further improved quantization strategies for cross-device FL applications. However, these methods focus solely on training a single global model and do not address privacy risks or client-specific control over synthetic data generation. \citet{sattarov2024differentially,sattarov2024fedtabdiff} used DP-SGD in the FedAvg updates for training their diffusion models on tabular data, providing some level of privacy, but still no client-specific control. In the non-federated setting (i.e., with a single dataset), \citet{wang2024dp} developed a training protocol that selectively uses DP-SGD only to train the final stages of de-noising, thus claiming to improve the privacy-utility trade-off of DP-trained diffusion models. However, since their method observes the actual noise used in forward diffusion, their approach does not ensure differential privacy guarantees.

\paragraph{Personalization and Differential Privacy.} Personalization is a widely used strategy in FL to address data heterogeneity and provide client-specific control~\citep{mishchenko2023partially}. Most personalized FL frameworks partition the model into shared components, trained collaboratively, and local components tailored to individual clients. Personalization can also enhance the privacy-utility trade-off, as an appropriately designed shared model can focus on generalizable patterns while leaving fine-grained, client-specific features to the local model~\citep{bietti2022personalization}. \citet{jothiraj2023phoenix} applied this approach to federated diffusion models by splitting the UNet architecture into shared and local components. \citet{chen2024fedbip} proposed to use a more involved personalization to utilize diffusion models for one-shot fine-tuning on clients in the federated setting. Unfortunately, neither of these works provides formal privacy guarantees beyond those inherent to standard federated training. 
In this paper, we leverage the inherent noise in the forward diffusion process, proposing a novel federated training framework that balances privacy, personalization, and sample quality.

\section{Preliminaries}\label{sec:pre}
\subsection{Denoising Diffusion Probabilistic Models (DDPM)}
 Diffusion models aim to learn a distribution $p_{\theta}$ that approximates a target distribution $q_0$ of interest. Their learning process consists of a forward diffusion process and a reverse process. In the diffusion process, a sample $x_0\sim q_0$
is sequentially corrupted by random Gaussian noise, and with enough time steps, the
data distribution is transformed into pure noise. In
the reverse process, a neural network is
trained to denoise this pure noise, i.e., remove the added noise sequentially to recover a new data distribution. The hope is that this recovered distribution is ``close" to $q_0$ and allows generating new samples from $q_0$ not seen during training. 

Denoising Diffusion Probabilistic Models (DDPM) \citep{ho2020denoising} are one of the most popular diffusion models. Their diffusion process $q(x_t|x_{t-1})$, which produces corrupted latents $\{x_t\}_{t\in[T]}$, is defined as a Markov chain that sequentially adds Gaussian noise to the data according to the noise schedule $\{\beta_t\}_{t\in[T]}$:
\begin{align}\label{eq:ddpm_forward}
    q(x_t|x_{t-1}):=\cN(x_t;\sqrt{1-\beta_t}x_{t-1},\beta_t\Ib).
\end{align}
Given large enough $T$ and appropriate noise schedule, the latent $x_T$ nearly follows a standard Gaussian distribution. Furthermore, the diffusion process in \eqref{eq:ddpm_forward} implies 
\begin{align}\label{eq:diffuse_1step}
   q(x_t|x_0)=\cN(x_t;\sqrt{\bar \alpha_t}x_{0},(1-\bar\alpha_t)\Ib), 
\end{align}
 where $\bar \alpha_t=\prod_{s=1}^t(1-\beta_s)$. Thus, we can directly sample an arbitrary latent $x_t$ given $x_0$.

The reverse process $q(x_{t-1}|x_t)$ can be approximated by:
\begin{align}\label{eq:ddpm_reverse}
    p_\theta(x_{t-1}|x_t):=\cN(x_t;\mu_{\theta}(x_t,t),\Sigma_{\theta}(x_t,t)),
\end{align}
where $\mu_{\theta}(x_t,t)$ is a trainable neural network with model parameter $\theta$ and $\Sigma_{\theta}(x_t,t)$ can be set to $\sigma^2_t\Ib$ \citep{ho2020denoising} (see \cite{nakkiran2024step} for an intuition about why this is true).

To learn $p_\theta$, we can minimize the variational upper bound $\EE_q[-\log p_{\theta}(x_{0:T})/q(x_{1:T}|x_0)]$ which is equivalent to minimizing the following sum of KL divergences \citep{sohl2015deep, ho2020denoising}
\begin{align}\label{eq:KL_object}
    \EE_q\Big[\textstyle{\sum_{t>1}}D_{\text{KL}}\big(q(x_{t-1}|x_t,x_0)||p_{\theta}(x_{t-1}|x_t)\big)\Big],
\end{align}
where $q(x_{t-1}|x_t,x_0)=\cN(x_{t-1};\tilde \mu_t(x_t,x_0),\tilde\beta_t\Ib)$ and  $\tilde \mu_t(x_t,x_0),\ \tilde \beta_t$ can be parameterized \citep{ho2020denoising} as:
\begin{align*}
    \tilde \mu_t(x_t,x_0)=\frac{1}{\sqrt{1-\beta_t}}\bigg(x_t-\frac{\beta_t}{\sqrt{1-\bar\alpha_t}}z\bigg),~\tilde \beta_t=\frac{1-\bar\alpha_{t-1}}{1-\bar\alpha_{t}}\beta_t,
\end{align*}
where $z\sim\cN(0,\Ib)$.

Hence, \citet{ho2020denoising} proposed to represent $\mu_{\theta}(x_t,t)$ in $p_{\theta}$ using a noise prediction network, i.e., denoiser, $z_\theta$:
\begin{align*}
   \mu_{\theta}(x_t,t)= \frac{1}{\sqrt{1-\beta_t}}\bigg(x_t-\frac{\beta_t}{\sqrt{1-\bar\alpha_t}}z_\theta(x_t,t)\bigg),
\end{align*}
and the objective in \eqref{eq:KL_object} can be simplified to the following loss
$$ \EE_{t,x_0,z}\big[\|z-z_\theta(x_t,t)\|_2^2\big],$$
where $t$ is uniformly sampled from $1$ to $T$. The detailed training procedure is summarized in Algorithm \ref{alg:DDPM}. After obtaining the denoiser $z_\theta$, we can generate a sample by drawing $x_T\sim \cN(0,\Ib)$ and using the form of $p_{\theta}$. The detailed sampling procedure is illustrated in Algorithm \ref{alg:sampling_ddpm}.
\begin{algorithm}[t]
\caption{Training Procedure for DDPM  (T-DDPM)}\label{alg:DDPM}
\begin{algorithmic}[1]
\INPUT Training dataset $D$, model parameters $\theta$, time step $T$, noise scheduling parameters $\{\beta_t\}_{t=1}^{T}$
\REPEAT
\STATE Sample $x_0$ from $D$
\STATE Sample $t \sim \text{Uniform}(\{1, \dots, T\})$
\STATE Sample random noise $z \sim \cN(0,\Ib)$
\STATE Set $\bar \alpha_t=\prod_{s=1}^t(1-\beta_s)$
\STATE Take gradient descent step on\\
\qquad$\nabla_{\theta}\big\|z-z_\theta\big(\sqrt{\bar\alpha_{t}}x_0+\sqrt{1-\bar\alpha_{t}}z,t\big)\big\|_2^2$
\UNTIL converged
\OUTPUT  denoiser $z_{\theta}$
\end{algorithmic}
\end{algorithm}

\begin{algorithm}[t]
\caption{Sampling Procedure for DDPM (S-DDPM)}\label{alg:sampling_ddpm}
\begin{algorithmic}[1]
\INPUT Denoiser $z_\theta$, time step $T$, noise scheduling parameters $\{\beta_t,\sigma_t\}_{t=1}^{T}$
\STATE Set $\bar \alpha_t=\prod_{s=1}^t(1-\beta_s)$
\STATE Sample $x_T \sim \cN(0,\Ib)$ 
\FOR{$t=T,\ldots,1$}
\STATE Sample $z \sim \cN(0,\Ib)$ if $t>1$ else $z=0$
\STATE $x_{t-1}=\frac{1}{\sqrt{1-\beta_t}}\Big(x_t-\frac{\beta_t}{\sqrt{1-\bar \alpha_t}}z_{\theta}(x_t,t)\Big)+\sigma_t z$
\ENDFOR
\OUTPUT  $x_0$
\end{algorithmic}
\end{algorithm}

\subsection{Differential Privacy}
We will show that our proposed method can ensure differential privacy guarantees. We introduce two different notions of differential privacy (DP), central differential privacy \citep{dwork2006calibrating} and local differential privacy \citep{kasiviswanathan2011can}.

\begin{definition}[$(\epsilon,\delta)$-DP]\label{def:DP}
A randomized mechanism $\cA$ satisfies $(\epsilon,\delta)$-differential privacy if for adjacent datasets $D,D'$ differing by one element, and any output subset $O$, it holds that 
$$\PP[\cA(D)\in O]\leq e^\epsilon\cdot \PP[\cA(D')\in O]+\delta.$$
\end{definition}
The idea of central DP is to ensure that any output should be about as likely (controlled by $\epsilon$)
regardless of whether an individual's data is included in the dataset or not. $\epsilon$ is known as the privacy budget, and a
smaller $\epsilon$ enforces a stronger privacy guarantee. In this paper, we focus on the following local differential privacy \citep{kasiviswanathan2011can}. 
\begin{definition}[$(\epsilon,\delta)$-LDP]\label{def:LDP}
A randomized mechanism $\cA$ satisfies $(\epsilon,\delta)$-local differential privacy if for any two inputs $x,x'$, and any output subset $O$, it holds that $$\PP[\cA(x)\in O]\leq e^\epsilon\cdot \PP[\cA(x')\in O]+\delta.$$
\end{definition}
The idea of local DP is to ensure that any output should be
about as likely regardless of an individual's data. Therefore, local DP often provides a much stronger privacy protection for the individual's data compared to central DP.

\section{Proposed Method}\label{sec:method}
Our proposed training framework, i.e., PFDM in Algorithm \ref{alg:train_feddiff}, consists of two stages. At the first stage, each client $m\in[M]$ will train a personalized denoiser $z_{\theta_m}$ based on its training dataset $D_m$ (line 2 in Algorithm \ref{alg:train_feddiff}), and this model will not be shared with other clients or the server. $z_{\theta_m}$ is trained using the DDPM training procedure (Algorithm \ref{alg:DDPM}) with $t_0$ time steps.  
After obtaining $z_{\theta_m}$, a noisy dataset $\tilde D_m$ is created with each data point generated through the diffusion process (lines 3-8 in Algorithm \ref{alg:train_feddiff}). The noisy dataset will then be sent to the server.
\begin{algorithm}[ht]
\caption{\textbf{P}ersonalized \textbf{F}ederated Training of \textbf{D}iffusion \textbf{M}odel (PFDM)}\label{alg:train_feddiff}
\begin{algorithmic}[1]
\INPUT Training dataset $D_m$ for each client $m \in [M]$, shared model parameter $w$, client-specific model parameters $\{\theta_m\}_{m=1}^M$, local time step $t_0$, global time step $T$, noise schedule $\{\beta_t\}_{t=1}^{T}$
\FORC{$m \in [M]$} 
    \STATE Train a personalized secret denoiser $z_{\theta_m}=\text{T-DDPM}(D_m, t_0, \{\beta_t\}_{t=1}^{T},\theta_m)$
    \STATE Sample $N$ data from $D_m$ indexed by $\cB_{m}$
    \STATE Set $\bar \alpha_t=\prod_{s=1}^t(1-\beta_s)$
    \FOR{$i \in \cB_{m}$} 
        \STATE Obtain noisy data $\tilde{x}_0^{i,m}=\sqrt{\bar\alpha_{t_0}}x_0^{i,m}+\sqrt{1-\bar\alpha_{t_0}}z$, where $z \sim \cN(0,\Ib)$
    \ENDFOR
    \STATE \textbf{Send (communicate)} $\tilde D_m=\{\tilde{x}_0^{i,m}\}_{i\in \cB_{m}}$ to server
\ENDFORC
\FORS{}
    \STATE Obtain $\tilde D$ by combining $\{\tilde D_m\}_{m\in[M]}$ 
    \STATE  Train a shared global denoiser 
    $z_{w}=\text{T-DDPM}(\tilde D, T, \{\beta_t\}_{t=1}^{T}, w)$
\ENDFORS
\OUTPUT  shared global denoiser  $z_{w}$
\end{algorithmic}
\end{algorithm}
At the second stage, the server will train a global denoiser based on the collected noisy datasets $\{\tilde D_m\}_{m\in[M]}$ from all clients (lines 11-14 in Algorithm \ref{alg:train_feddiff}). Note that the diffusion process (line 7 in Algorithm 
 \ref{alg:train_feddiff}) can ensure certain level of differential privacy guarantee for each data point, and thus the trained global denoiser $z_w$ is also differentially private. Therefore, $z_w$ can be published and shared among clients. We formally provide such guarantees in the next subsection.

After obtaining the differentially private shared global denoiser $z_w$ and personalized secret denoisers $\{z_{\theta_m}\}_{m\in[M]}$, we can generate samples for each client. The detailed sampling procedure is illustrated in Algorithm \ref{alg:sampling_feddiff}. More specifically, for client $m$, it will first receive the sample $\tilde x_0$ generated using the global denoiser $z_w$. Then the client will use its personalized secret denoiser $z_{\theta_m}$ to denoise the receive sample $\tilde x_0$ for another $t_0$ steps to get the synthetic sample that the client wants.
\begin{algorithm}[ht]
\caption{Sampling Procedure for Personalized Federated Diffusion Model}\label{alg:sampling_feddiff}
\begin{algorithmic}[1]
\INPUT Shared global denoiser $z_w$, personalized denoiser $z_{\theta_m}$, global time step $T$, local time step $t_0$ noise scheduling parameters $\{\beta_t,\sigma_t\}_{t=1}^{T}$
\STATE $\tilde x_0=\text{S-DDPM}(z_w,T,\{\beta_t,\sigma_t\}_{t=1}^{T})$
\STATE Set $\bar \alpha_t=\prod_{s=1}^t(1-\beta_s)$
\STATE $x_{t_0}=\tilde x_0$
\FOR{$t=t_0,\ldots,1$}
\STATE $x_{t-1}=\frac{1}{\alpha_t}\Big(x_t-\frac{1-\alpha_t}{\sqrt{1-\bar \alpha_t}}z_{\theta_m}(x_t,t)\Big)+\sigma_t z$, where $z \sim \cN(0,\Ib)$ if $t>1$ else $z=0$
\ENDFOR
\OUTPUT  $x_0$
\end{algorithmic}
\end{algorithm}

We expect each client to generate high-quality desired samples and protect the privacy of their training data using the denoisers outputted by our PFDM framework. This is because the reverse process $p_w$ (line 1 in Algorithm \ref{alg:sampling_feddiff}) with the shared global denoiser $z_w$ aims to approximate the mixture of $M$ diffused data distributions $\{q_m(x_{t_0})\}_{m\in[M]}$, where $q_m(x_{t_0})=\int q_m(x_0)q(x_{t_0}|x_0)dx_0$, $q_m(x_0)$ is the input data distribution for client $m$, and $q(x_{t_0}|x_0)$ is defined in \eqref{eq:diffuse_1step}. Take a natural image as one example. During the diffusion process, the image's fine-grained details (e.g., textural details) are perturbed faster than the macro structures such as the background \citep{rissanen2022generative}. As a result, the diffused data distributions $\{q_m(x_{t_0})\}_{m\in[M]}$ capture the overall structure distributions, which are often similar to each other. As a result, the reverse process $p_w$, which is trained based on $M$ training datasets with each $\tilde D_m$ drawn from $q_m(x_{t_0})$, can approximate a shared distribution that is useful for each client. Since the denoiser $z_w$ used in $p_w$ is trained on the noisy data, we get privacy protections for the clean training datasets on the devices.

On the other hand, the personalized secret denoiser $z_{\theta_m}$ is trained to denoise the image from $x_{t_0}$ to generate fine-grained details (desired by the specific client $m$) in the image. Since the fine-grained details often contain sensitive information, we propose to train $z_{\theta_m}$ using client $m$'s data and keep it secret. As a result, each client can generate a high-quality desired sample where the overall structure in the generated image can benefit from the collaborative training of the global model, and the fine-grained details can be desired by using the personalized local model. Note that intuitively, the less noisy the dataset for training the global model, the easier it is to train the local model on each device, but the weaker our privacy guarantee is. Thus, there is a trade-off between the complexity of client denoising and the privacy guarantee, which we formalize in the next section.

\section{Main Results}\label{sec:result}
We show that our proposed PFDM algorithm can ensure local differential privacy.
\begin{theorem}[Privacy Guarantee of \textsc{PFDM}]\label{thm:privacy}
Given the training dataset $D=\{D_m\}_{m\in[M]}$, if we choose the local time step to be $t_0$, global time step to be $T$, noise scheduling parameters to be $\{\beta_t\}_{t=1}^{T}$, then the output of Algorithm \ref{alg:train_feddiff} is $\Big(\frac{2\bar \alpha_{t_0}C^2}{1-\bar \alpha_t}+C\sqrt{\frac{8\bar \alpha_{t_0}\log(1/\delta)}{1-\bar \alpha_{t_0}}},\delta\Big)$-LDP for all $x \in D$, where $\bar \alpha_t=\prod_{s=1}^t(1-\beta_s)$ and $C=\|x\|_2$. 
\end{theorem}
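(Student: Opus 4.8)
The plan is to recognize that the only step of Algorithm~\ref{alg:train_feddiff} that ever touches a client's raw data point is the forward-diffusion map in line~7, and that everything the server subsequently does—assembling $\tilde D$ and training $z_w$ via T-DDPM—is a (randomized) post-processing of the released noisy points. Since local differential privacy is closed under post-processing, and since each selected sample is noised exactly once (so no composition is needed), it suffices to show that the single-point release $\cA(x):=\sqrt{\bar\alpha_{t_0}}\,x+\sqrt{1-\bar\alpha_{t_0}}\,z$ with $z\sim\cN(0,\Ib)$ is $(\epsilon,\delta)$-LDP. The key observation is that $\cA$ is exactly a Gaussian mechanism: its output law is $\cN(\sqrt{\bar\alpha_{t_0}}\,x,\,(1-\bar\alpha_{t_0})\Ib)$, a clean signal scaled by $a:=\sqrt{\bar\alpha_{t_0}}$ plus isotropic noise of variance $\sigma^2:=1-\bar\alpha_{t_0}$.

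Next I would compute the sensitivity and the privacy-loss random variable. For two admissible inputs $x,x'$ the two output Gaussians share the covariance $\sigma^2\Ib$, and their means differ by $\Delta:=a\,\|x-x'\|_2$; using the uniform bound $\|x\|_2\le C$ on the data gives $\|x-x'\|_2\le 2C$ and hence $\Delta\le 2aC=2\sqrt{\bar\alpha_{t_0}}\,C$. Writing $p,q$ for the densities of $\cA(x),\cA(x')$ and projecting onto the one-dimensional direction $x-x'$ (the orthogonal coordinates cancel in the log-ratio), a direct computation shows that the privacy loss $L:=\log\big(p(Y)/q(Y)\big)$ with $Y\sim p$ is Gaussian, $L\sim\cN\!\big(\tfrac{\Delta^2}{2\sigma^2},\,\tfrac{\Delta^2}{\sigma^2}\big)$.

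It then remains to convert this into an $(\epsilon,\delta)$ statement. I would invoke the standard sufficient condition that $\cA$ is $(\epsilon,\delta)$-DP whenever $\PP_{Y\sim p}[L(Y)>\epsilon]\le\delta$, and bound the upper Gaussian tail of $L$ by $\exp(-t^2/2)$. Choosing the threshold so that $(\epsilon-\tfrac{\Delta^2}{2\sigma^2})/(\Delta/\sigma)=\sqrt{2\log(1/\delta)}$ yields $\epsilon=\tfrac{\Delta^2}{2\sigma^2}+\tfrac{\Delta}{\sigma}\sqrt{2\log(1/\delta)}$; substituting $\Delta=2\sqrt{\bar\alpha_{t_0}}\,C$ and $\sigma^2=1-\bar\alpha_{t_0}$ produces exactly the two terms $\tfrac{2\bar\alpha_{t_0}C^2}{1-\bar\alpha_{t_0}}$ and $C\sqrt{\tfrac{8\bar\alpha_{t_0}\log(1/\delta)}{1-\bar\alpha_{t_0}}}$ claimed in the statement (I read the $1-\bar\alpha_t$ in the first denominator as a typo for $1-\bar\alpha_{t_0}$, since the noise injected in line~7 has variance $1-\bar\alpha_{t_0}$).

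The density algebra and the Gaussian tail estimate are routine; the step that deserves care is the reduction itself. I must argue that publishing the denoiser $z_w$ leaks no more about any individual raw sample than publishing its single noisy copy—that is, that the server's training is genuinely post-processing and that no client sample is reused across lines~3--8, so the per-point guarantee is not eroded by composition. A secondary subtlety is the sensitivity bound: the factor $2$ in $\|x-x'\|_2\le 2C$ (and thus the constants $2C^2$ and $8$) hinges on treating $C$ as a uniform norm bound over the data domain, which I would state explicitly as the operative assumption behind the shorthand ``$C=\|x\|_2$.''
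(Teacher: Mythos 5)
Your proposal is correct and reaches the paper's exact bound, but the conversion step is done differently. The reduction is identical in both: the only access to a raw sample is the single forward-diffusion release $\sqrt{\bar\alpha_{t_0}}\,x+\sqrt{1-\bar\alpha_{t_0}}\,z$, which is a Gaussian mechanism with sensitivity $2\sqrt{\bar\alpha_{t_0}}\,C$ and noise variance $1-\bar\alpha_{t_0}$, and the server's training of $z_w$ is post-processing with no composition. Where you diverge is in how the $(\epsilon,\delta)$ guarantee for that mechanism is extracted: the paper first invokes the R\'enyi-DP bound $(\gamma,\gamma S_2^2/(2\sigma^2))$ for the Gaussian mechanism, then applies the RDP-to-DP conversion $\epsilon=\gamma\tau+\log(1/\delta)/(\gamma-1)$ and optimizes over $\gamma$ to get $\tau+2\sqrt{\tau\log(1/\delta)}$ with $\tau=\Delta^2/(2\sigma^2)$; you instead work directly with the privacy-loss random variable $L\sim\cN\big(\Delta^2/(2\sigma^2),\Delta^2/\sigma^2\big)$ and a one-sided Gaussian tail bound, which yields $\Delta^2/(2\sigma^2)+(\Delta/\sigma)\sqrt{2\log(1/\delta)}$ --- algebraically the same expression. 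Your route is more elementary (no RDP machinery) and makes the structure of the privacy loss explicit; the paper's route is the one that would generalize painlessly if composition across multiple noisy releases per sample were ever needed, since RDP composes additively. Two points in your write-up are genuinely valuable and handled more carefully than in the paper: you flag that $1-\bar\alpha_t$ in the first denominator must be $1-\bar\alpha_{t_0}$ (the paper's proof carries the same typo), and you make explicit that the factor $2$ in the sensitivity, hence the constants $2C^2$ and $8$, requires reading $C$ as a uniform bound $\sup_x\|x\|_2$ over the data domain rather than the norm of one fixed $x$ as the theorem's notation ``$C=\|x\|_2$'' suggests.
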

Consider $D$ to be the MNIST dataset. Suppose for a given training data $x$, we have  $\|x\|_2\leq 10$. If we choose $T$ to be 1000 steps, $\beta_t$ to be a linear schedule as in the standard DDPM training, then Theorem \ref{thm:privacy} implies that when $t_0=400$, our proposed PFDM algorithm is $(\epsilon,\delta)$-LDP with $\epsilon=95$ and $\delta=10^{-5}$. Although $\epsilon=95$ seems to be too large to provide any meaningful privacy guarantee, the protection we provide here is for the whole image $x$ with $784$ pixels. In many practical scenarios, we aim to protect specific pixels. In that case our method can provide $\Big(\frac{2\bar \alpha_{t_0}c^2}{(1-\bar \alpha_t)}+c\sqrt{\frac{8\bar \alpha_{t_0}\log(1/\delta)}{(1-\bar \alpha_{t_0})}},\delta\Big)$-DP for each pixel in $x$, where $c$ is the maximum value over all pixels in $x$. Suppose $c=1$, we have $\epsilon=5.2$ for each pixel. Furthermore, if we want to protect specific $k$ pixels, according to group privacy, our method can provide $\Big(k\epsilon_1+k\epsilon_2\sqrt{5+k(\epsilon_1+\epsilon_2)},10^{-5}\Big)$-DP for $k$ pixels in $x$, where $\epsilon_1=\frac{2\bar \alpha_{t_0}c^2}{(1-\bar \alpha_t)}$ and $\epsilon_2=\sqrt{\frac{8c^2\bar \alpha_{t_0}}{(1-\bar \alpha_{t_0})}}$. If we choose $k=10$, we have $\epsilon=72$ DP privacy guarantees for those pixels. In practice, if we want to use membership inference attacks to determine the membership of a given image or some of its pixels, two digits $\epsilon$ DP guarantee can effectively defend against many strong membership inference attacks \citep{jayaraman2020revisiting,lowy2024does}.

Note $t_0$ serves as a knob to control the privacy-utility trade-offs. The larger the $t_0$ we have, the smaller the $\bar \alpha_{t_0}$ we get, the stronger the privacy guarantee we can provide, and the worse our generated sample quality. The shared global model we used to generate samples in Figure \ref{fig:MNIST} can achieve $\epsilon=16.6$ DP for each pixel, and the samples generated using the global denoiser are shown in Figure \ref{fig:global_mnist}.
\begin{figure}[ht]
    \centering
\includegraphics[width=1.0\linewidth]{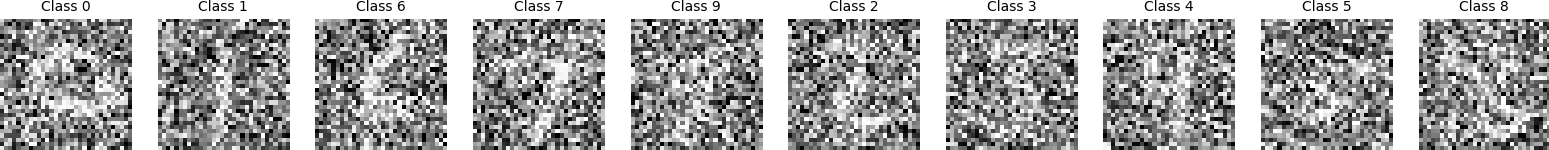}
\vspace{-0.2in}
    \caption{Illustration of generated samples using shared global denoiser in Figure \ref{fig:MNIST}.}    
    \label{fig:global_mnist}
\end{figure}

\section{Experiments}\label{sec:exp}
This section presents preliminary results on the CIFAR-10 dataset to evaluate our proposed method. 

\paragraph{Data Preparation.} We divide the CIFAR-10 dataset into two clusters: one with 4 classes (airplane, car, ship truck) and the other one with 6 classes (bird, cat, deer, dog, frog, horse). We then construct two distinct datasets, each comprising 5,000 samples from one cluster and 50 from the other, with the majority-minority clusters
reversed between the datasets.  

\paragraph{Conditional Generation.} Our goal is to perform conditional image generation for all classes. In our setting, we aim to protect the image (or its pixels) rather than its label information. Therefore, we treat the image label as the publicly available information. We included such label information during the training, which allowed us to obtain a conditional diffusion model.

\paragraph{Baselines.} We compare our method with the non-collaborative DMs, where we train a single diffusion model independently on each
dataset. We also present the results of the non-private DMs, where we train one diffusion model by combining two distinct datasets. We use the same linear noise schedule for all methods. For our method, we choose $t_0=100, T=1,000$. This leads to $\epsilon=45$ DP guarantees for each pixel of a given image in the CIFAR-10 training dataset according to Theorem \ref{thm:privacy}.

 \begin{table}[!t]
\caption{FID scores for different methods. For each method, we report the results for the model trained with cluster 1 (i.e., airplane, car, ship, truck) as majority class.}
\label{table:fid}
\begin{center}
\begin{small}
\begin{sc}
\begin{tabular}{clc}
\toprule
Generated Class & Methods & FID  \\
\midrule
\multirow{3}{*}{Airplane, car, Ship, Truck} & Non-private DMs  & 15.61 \\
                         & Non-collaborative DMs    & 16.60   \\& \textbf{Ours }    & \textbf{14.75}  \\
\midrule
\multirow{3}{*}{Deer, Dog, Frog, Horse} & Non-private DMs  & \textbf{15.60}  \\
                         & Non-collaborative DMs  & 23.67   \\
& \textbf{Ours }    & 17.31  \\
\bottomrule
\end{tabular}
\end{sc}
\end{small}
\end{center}
\end{table}

\begin{table}[t]
\caption{Test accuracies for different methods. * Our method is significantly better than the non-collaborative DMs according to the significance test.}
\label{table:test_acc}
\begin{center}
\begin{small}
\begin{sc}
\begin{tabular}{clc}
\toprule
 Methods & Test Accuracy  \\
\midrule
Non-private DMs  & 56.80  $\pm$ 1.1\\
Non-collaborative DMs  & 54.96  $\pm$ 2.5 \\
\textbf{Ours}  & \textbf{57.70}  $\pm$ \textbf{2.1} (*)\\
\midrule
Original CIFAR-10  & 71.56  $\pm$ 0.3\\
\bottomrule
\end{tabular}
\end{sc}
\end{small}
\end{center}
\end{table}

\begin{figure}[!t]
    \centering
    \subfigure[Ours]{%
        \includegraphics[width=0.5\textwidth]{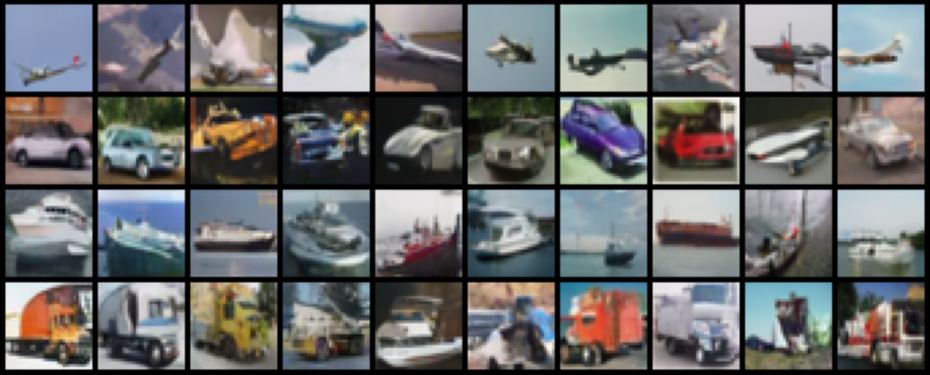}
    }%
    \subfigure[Non-collaborative DMs]{%
        \includegraphics[width=0.5\linewidth]{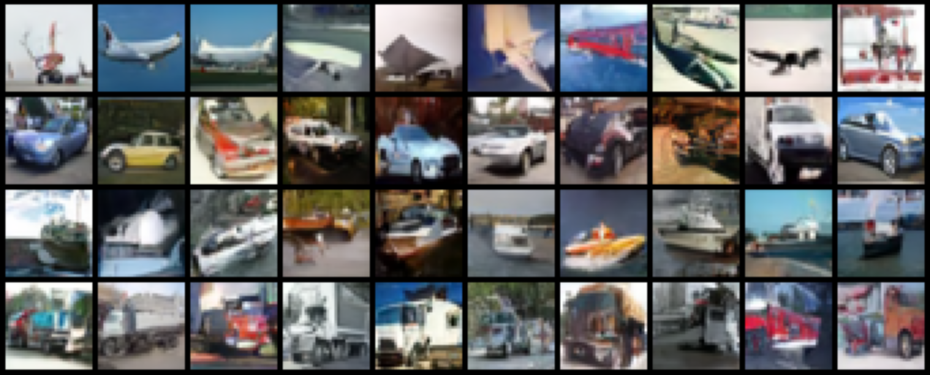}
    }
    \subfigure[Ours]{%
        \includegraphics[width=0.5\linewidth]{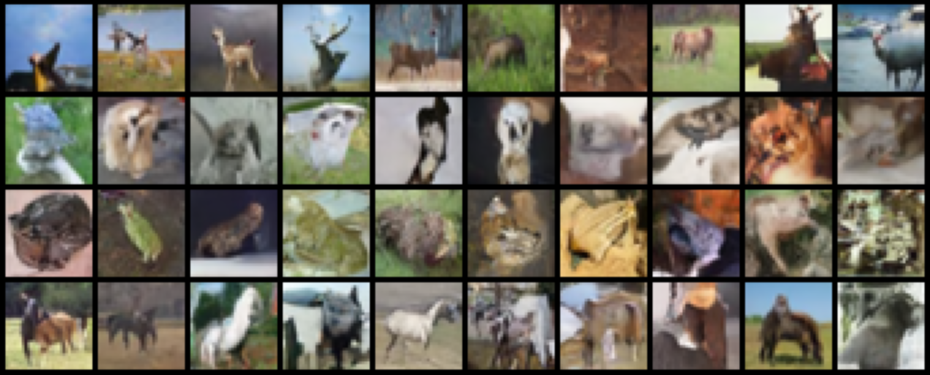}
    }%
    \subfigure[Non-collaborative DMs]{%
        \includegraphics[width=0.5\linewidth]{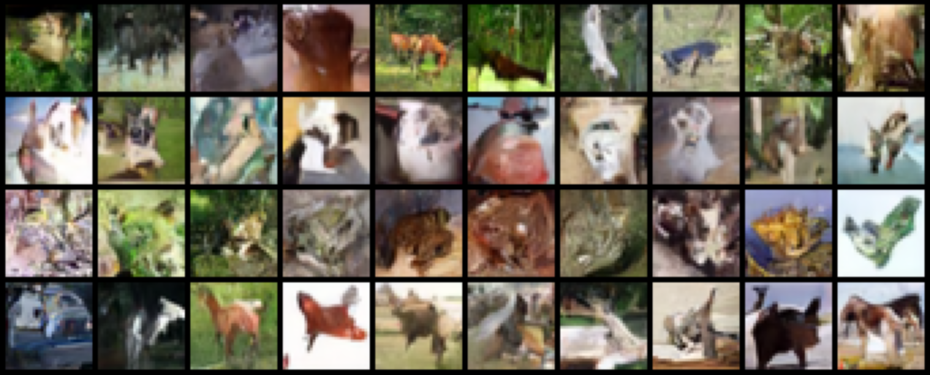}
    }
    \vspace{-0.1in}
    \caption{CIFAR-10 samples generated by different methods. (a), (b) correspond to samples generated for classes airplane, car, ship, truck. (c), (d) correspond to samples generated for classes deer, dog, frog, horse. We report the results for the
model trained with cluster 1 as majority class. }
    \label{fig:1}
\end{figure}

\begin{figure}[!t]
    \centering
\includegraphics[width=1.0\linewidth]{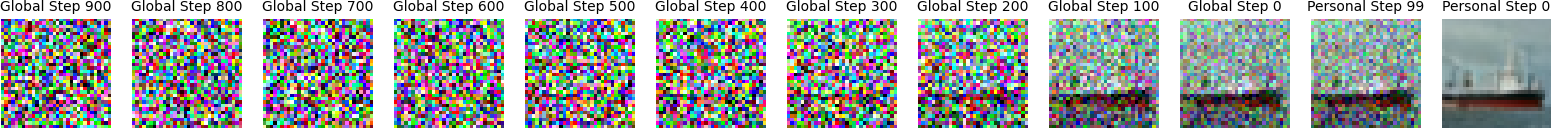}
\vspace{-0.2in}
    \caption{Illustration of our sampling procedure. The global step 0 correspond to the output using the shared global denoiser. The personal steps correspond to the outputs of using personalized denoiser.}
    \label{fig:3}
\end{figure}
\paragraph{Evaluation Metrics.} To evaluate the performance of different methods, we use two metrics. The first one is the Fréchet inception distance (FID), which is widely used to evaluate the image quality in the literature \citep{heusel2017gans,ho2020denoising}. In our experiments, we use 4,000 images to compute the FID for each cluster. For cluster 1, we sample 1,000 images for each class from CIFAR-10 test data. For cluster 2, we sample 1,000 images of deer, dog, frog, and horse classes. The second one is the classification accuracy of eight classes: four from cluster 1 and four (deer, dog, frog, horse) from cluster 2. We train a simple CNN model (two convolutional layers, two fully connected layers, and one max pooling layer) on 8,000 samples (1,000 samples for each class) generated by each method. We then report the classification accuracy on the CIFAR-10 test dataset.

\paragraph{Results.} Table \ref{table:fid} summarizes the FID scores for different methods. We report the results for the model trained with cluster 1 (i.e., airplane, car, ship, truck) as the majority class. We can see that our method outperforms other baselines for the majority class. For the minority class, models trained without collaboration perform significantly worse compared to our method. Figure \ref{fig:1} illustrates the generated samples for different methods. Table \ref{table:test_acc} reports the test accuracy of different methods. Our method significantly outperforms non-collaborative DMs. We also demonstrate the sampling process for our method in Figure \ref{fig:3}. The global step 0 corresponds to the output of using the shared global denoiser. We can see that our method indeed provides strong privacy guarantees for each pixel.

\section{Discussion}\label{sec:disc}
In this work, we propose a novel framework that allows us to collaboratively train
diffusion models across decentralized, private datasets. We provide differential privacy guarantees for our method, and experimental results validate the effectiveness of our proposed method.  As for the future work, we plan to conduct more experiments on large-scale datasets and models to evaluate the performance of our methods thoroughly. We also plan to use privacy attack methods to evaluate our method's privacy guarantees systematically.

\bibliography{arxiv_ref}
\bibliographystyle{apalike}

\newpage
\appendix
\section{Additional Related Work}\label{sec:related}
\paragraph{Beyond DDPM.}
Since the introduction of DDPMs, the field of diffusion models has seen significant advancements aimed at improving computational efficiency, sample quality, and applicability to various domains. Building upon DDPMs, \textbf{Latent Diffusion Models (LDMs)}~\citep{rombach2022high} enhance computational efficiency by operating in a compressed latent space rather than directly in pixel space. Utilizing a pre-trained encoder-decoder architecture, LDMs reduce memory and computational requirements for high-resolution image synthesis while maintaining high-quality outputs. This approach has facilitated applications such as text-to-image generation, inpainting, and super-resolution. Notably, \textbf{Stable Diffusion}, based on the LDM framework, has popularized these models by providing an open-source and scalable implementation.

Another notable development is the \textbf{Denoising Diffusion Implicit Models (DDIMs)}~\citep{song2020denoising}, which introduces a non-Markovian diffusion process to accelerate sampling. DDIMs achieve faster generation times by reducing the number of required sampling steps without compromising output quality, addressing one of the primary limitations of earlier diffusion models.

Further advancements include the integration of classifier guidance and classifier-free guidance techniques~\citep{dhariwal2021diffusion}. These methods enhance controllability in the generation process, allowing for more precise adherence to desired attributes or conditions during sample generation.

Recent research has also explored the application of diffusion models beyond image generation. For instance, \textbf{Upsampling Diffusion Probabilistic Models (UDPMs)}~\citep{abu2023udpm} focus on upsampling tasks, generating high-resolution images from lower-resolution inputs. Additionally, diffusion models have been adapted for applications in audio generation, text synthesis, and even reinforcement learning scenarios~\cite{zhu2023diffusion}.

These developments reflect the rapid evolution of diffusion models, expanding their capabilities and applications across various fields.

\textbf{Diffusion Models as a Tool for FL Applications.} A separate line of work applies diffusion models to facilitate specific FL tasks rather than federated training of the diffusion model itself. \citet{sattarov2024fedtabdiff} developed FedTabDiff, which employs diffusion models to generate high-fidelity tabular data in FL settings without requiring centralized access to raw data. Other works explore how diffusion models can improve privacy-preserving FL through synthetic data augmentation or as generative priors in adversarial training. For example, FL frameworks incorporating diffusion models for one-shot learning and differential privacy constraints have been studied in \citep{lai2024demand}.

While these application-oriented studies demonstrate the versatility of diffusion models, they do not address the core challenges of federated training of generative models. The primary challenge remains how to train diffusion models in FL while mitigating privacy risks and ensuring meaningful personalization. Our work builds upon the first category by introducing a personalized federated diffusion model that overcomes these limitations.

\section{Proof of Main Theorem}
\subsection{Proof of Theorem \ref{thm:privacy}}\label{sec:app_proof}
We first introduce the definition of R\'enyi Differential Privacy (RDP) \citep{mironov2017renyi}. In our privacy analysis, we first use RDP to account for privacy loss and then translate the RDP guarantee to $(\epsilon,\delta)$-DP guarantee.
\begin{definition}[RDP] A randomized mechanism $\cA$ satisfies $(\gamma,\rho)$-R\'enyi differential privacy with $\gamma>1$ and $\rho>0$ if for adjacent datasets $D,D'\in\cD$ differing by one element,  $D_{\gamma}\big(\cA(D)||\cA(D')\big)=\log\EE_{\cA(D')}\big(\cA(D)/\cA(D')\big)^\gamma/(1-\gamma)\leq \rho$.
\end{definition}
Given a privacy guarantee in terms of RDP, we can transfer it to $(\epsilon,\delta)$-DP using the following lemma \citep{mironov2017renyi}.
\begin{lemma}[RDP to DP]\label{lemma:RDP_to_DP}
If a randomized mechanism $\cA$ satisfies $(\gamma,\rho)$-RDP, then $\cA$ satisfies $(\rho+\log(1/\delta)/(\gamma-1),\delta)$-DP for all $\delta\in(0,1)$.
\end{lemma}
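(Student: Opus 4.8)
The plan is to prove the conversion by analyzing the \emph{privacy loss random variable} and splitting the output probability into a ``typical'' part, absorbed by the $e^{\epsilon}$ factor, and a ``tail'' part, absorbed by $\delta$. Fix an arbitrary ordered pair of adjacent datasets $D,D'$ and write $P$ and $Q$ for the output distributions $\cA(D)$ and $\cA(D')$. Unpacking the $(\gamma,\rho)$-RDP hypothesis for this pair, it is exactly the moment bound
$$\EE_{x\sim Q}\Big[\big(P(x)/Q(x)\big)^{\gamma}\Big]\le e^{(\gamma-1)\rho}.$$
Define the privacy loss $L(x)=\log\big(P(x)/Q(x)\big)$. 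The target inequality $\PP[\cA(D)\in O]\le e^{\epsilon}\PP[\cA(D')\in O]+\delta$ will follow from a pointwise decomposition of $P(O)$ combined with a Chernoff-type tail bound on $L$.

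First I would decompose, for an arbitrary measurable $O$ and a threshold $\epsilon$ to be fixed later,
$$P(O)=\int_{O\cap\{L\le\epsilon\}}P(x)\,dx+\int_{O\cap\{L>\epsilon\}}P(x)\,dx.$$
On the first region I substitute $P(x)=e^{L(x)}Q(x)\le e^{\epsilon}Q(x)$, bounding that term by $e^{\epsilon}Q(O)$. The second region is at most $\PP_{x\sim P}[L(x)>\epsilon]$, so altogether $P(O)\le e^{\epsilon}Q(O)+\PP_{x\sim P}[L(x)>\epsilon]$. It then suffices to choose $\epsilon$ making this tail probability at most $\delta$.

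The key step is to control the tail via the RDP moment. By Markov's inequality applied to the nonnegative variable $e^{(\gamma-1)L(x)}$ under $x\sim P$,
$$\PP_{x\sim P}[L(x)>\epsilon]\le e^{-(\gamma-1)\epsilon}\,\EE_{x\sim P}\big[e^{(\gamma-1)L(x)}\big].$$
The crucial identity is that this moment is precisely the quantity bounded by RDP: changing the reference measure from $P$ to $Q$,
$$\EE_{x\sim P}\big[e^{(\gamma-1)L(x)}\big]=\int P(x)\Big(\tfrac{P(x)}{Q(x)}\Big)^{\gamma-1}dx=\EE_{x\sim Q}\Big[\big(P(x)/Q(x)\big)^{\gamma}\Big]\le e^{(\gamma-1)\rho}.$$
Combining the last two displays yields $\PP_{x\sim P}[L(x)>\epsilon]\le e^{(\gamma-1)(\rho-\epsilon)}$, and setting $\epsilon=\rho+\log(1/\delta)/(\gamma-1)$ forces the exponent to equal $\log\delta$, so the tail is at most $\delta$. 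Substituting back gives $P(O)\le e^{\epsilon}Q(O)+\delta$ for every $O$; since $D,D'$ were an arbitrary adjacent pair, $\cA$ is $(\epsilon,\delta)$-DP with the stated $\epsilon$.

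I expect the only genuine obstacle to be measure-theoretic bookkeeping rather than anything conceptual, since the lemma is the standard RDP-to-DP translation. One must ensure $L$ and the change of measure are well defined where $Q(x)=0$; this is handled by noting that finiteness of $D_{\gamma}(P\|Q)$ for $\gamma>1$ forces $P\ll Q$ (otherwise the integrand $p^{\gamma}q^{1-\gamma}$ diverges on a set of positive $P$-mass), so the analysis may be restricted to the support of $Q$ without loss. I would also verify that the thresholding and Markov steps go through for general distributions, not merely densities, by phrasing the split through the Radon--Nikodym derivative $dP/dQ$; these checks are routine.
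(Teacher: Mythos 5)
The paper never proves this lemma itself---it is imported verbatim from \citep{mironov2017renyi}---and your argument is precisely the standard proof of that result (Proposition 3 in Mironov's paper): decompose $P(O)$ over the event $\{L\le\epsilon\}$ versus its complement, bound the tail $\PP_{x\sim P}[L(x)>\epsilon]$ by Markov applied to $e^{(\gamma-1)L}$, and use the change-of-measure identity $\EE_{P}\big[e^{(\gamma-1)L}\big]=\EE_{Q}\big[(P/Q)^{\gamma}\big]\le e^{(\gamma-1)\rho}$, with the choice $\epsilon=\rho+\log(1/\delta)/(\gamma-1)$ making the tail exactly $\delta$; your measure-theoretic remark that finiteness of $D_{\gamma}(P\Vert Q)$ for $\gamma>1$ forces $P\ll Q$ is also correct and closes the only real gap. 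Your proof is complete and correct, and you have implicitly fixed a sign typo in the paper's stated RDP definition, whose displayed normalization $1/(1-\gamma)$ should read $1/(\gamma-1)$ for the moment bound $\EE_{Q}\big[(P/Q)^{\gamma}\big]\le e^{(\gamma-1)\rho}$ that both you and the conversion lemma rely on.
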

To ensure the RDP, we need the following result for Gaussian mechanism \cite{mironov2017renyi}.

\begin{lemma}[Gaussian Mechanism]
\label{lemma:GaussianM_RDP}
Given a function $q$, the Gaussian Mechanism $\cA=q(D)+z$, where $z\sim N(0,\sigma^2\Ib)$, satisfies $(\gamma,\gamma S_2^2/(2\sigma^2))$-RDP, where $S_2$ is the $\ell_2$-sensitivity of $q$ and is defined as $S_2=\sup_{D,D'}\|q(D)-q(D')\|_2$ for two adjacent datasets $D,D'$ differing by one element.
\end{lemma}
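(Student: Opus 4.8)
The plan is to reduce the claim to a direct Gaussian integral computation, following the standard argument for the R\'enyi divergence between two Gaussians with a shared covariance. Write $\mu = q(D)$ and $\mu' = q(D')$, so that $\cA(D) \sim \cN(\mu, \sigma^2 \Ib)$ and $\cA(D') \sim \cN(\mu', \sigma^2 \Ib)$. By the definition of RDP it suffices to bound $D_{\gamma}\big(\cN(\mu,\sigma^2\Ib) \,\|\, \cN(\mu',\sigma^2\Ib)\big)$ uniformly over adjacent $D, D'$, and since $\|\mu - \mu'\|_2 \le S_2$ by the definition of $\ell_2$-sensitivity, the whole task collapses to establishing the identity
\begin{align}
D_{\gamma}\big(\cN(\mu,\sigma^2\Ib) \,\|\, \cN(\mu',\sigma^2\Ib)\big) = \frac{\gamma\,\|\mu-\mu'\|_2^2}{2\sigma^2}.
\end{align}

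First I would reduce to one dimension. Because both covariances equal $\sigma^2\Ib$, the density ratio depends only on the component of the argument along the direction $\mu - \mu'$: rotating coordinates so that $\mu - \mu'$ is aligned with the first axis (rotations preserve both the isotropic Gaussian and the R\'enyi divergence, and the divergence is translation invariant), the two densities factor into products of identical marginals in every orthogonal coordinate, and those factors cancel in the ratio. Hence the moment $\EE_{\cA(D')}\big[(\cA(D)/\cA(D'))^{\gamma}\big]$ equals the corresponding one-dimensional moment with scalar means separated by $\|\mu-\mu'\|_2$.

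The core step is then the scalar Gaussian integral. Writing the moment as $\int p(x)^{\gamma} q(x)^{1-\gamma}\,dx$ for one-dimensional densities $p = \cN(a,\sigma^2)$ and $q = \cN(b,\sigma^2)$, the combined exponent $\gamma(x-a)^2 + (1-\gamma)(x-b)^2$ completes the square about $\bar a = \gamma a + (1-\gamma)b$, leaving a normalized Gaussian that integrates to one, multiplied by the residual factor $\exp\big(\tfrac{\gamma(\gamma-1)(a-b)^2}{2\sigma^2}\big)$. Applying $\tfrac{1}{\gamma-1}\log(\cdot)$ cancels the $(\gamma-1)$ and yields $\tfrac{\gamma(a-b)^2}{2\sigma^2}$; substituting $(a-b)^2 = \|\mu-\mu'\|_2^2$ gives the displayed identity, and the sensitivity bound $\|\mu-\mu'\|_2 \le S_2$ converts it into $\rho = \gamma S_2^2/(2\sigma^2)$, which is exactly the claimed RDP parameter.

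I expect the only genuine obstacle to be the bookkeeping in completing the square---specifically, verifying that the quadratic and cross terms collapse to precisely $-\gamma(1-\gamma)(a-b)^2$, so that the $(\gamma-1)$ appearing in the R\'enyi normalization cancels cleanly and no spurious factor survives. This is the one place a sign slip could creep in. The dimensional reduction and the final sensitivity substitution are routine once the scalar identity is secured.
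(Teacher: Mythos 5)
Your proposal is correct: the reduction to one dimension via rotation and translation invariance, the complete-the-square evaluation of $\int p^{\gamma}q^{1-\gamma}\,dx = \exp\bigl(\gamma(\gamma-1)\|\mu-\mu'\|_2^2/(2\sigma^2)\bigr)$, and the final substitution $\|\mu-\mu'\|_2 \le S_2$ are all sound and yield exactly the stated $(\gamma,\gamma S_2^2/(2\sigma^2))$-RDP bound. The paper itself gives no proof of this lemma --- it is imported from \citet{mironov2017renyi} --- and your argument is precisely the standard derivation from that reference, so there is nothing to add.
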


Now, we are ready to provide the privacy guarantees of our method.
\begin{proof}[Proof of Theorem \ref{thm:privacy}]
    According to Algorithm \ref{alg:train_feddiff}, the training of the shared global denoiser $z_w$ is based on the noisy dataset $\tilde D=\{\tilde D_m\}_{m\in [M]}$. For each data point $\tilde x_0^{i,m}$ in $\tilde D$, it is generated by adding random Gaussian noise to the original data as follows: $\tilde{x}_0^{i,m}=\sqrt{\bar\alpha_{t_0}}x_0^{i,m}+\sqrt{1-\bar\alpha_{t_0}}z$ (see line 7 in Algorithm \ref{alg:train_feddiff}), where $z\sim \cN(0,\Ib)$. Therefore, we only need to prove the privacy guarantee for $\sqrt{\bar\alpha_{t_0}}x_0^{i,m}$ under Gaussian mechanism. By Lemma \ref{lemma:GaussianM_RDP}, we have that each data point in $\tilde D$ is $(\gamma,\gamma\tau)$-RDP with $\tau=2\bar \alpha_{t_0}C^2/(1-\bar \alpha_t)$. According to Lemma \ref{lemma:RDP_to_DP}, it is $(\epsilon,\delta)$-DP with $\epsilon=\gamma\tau+\log(1/\delta)/(\gamma-1)$. Therefore, we can choose $\gamma=1+\sqrt{\log(1/\delta)/\tau}$ to get the smallest $\epsilon=\tau+2\sqrt{\log(1/\delta)\tau}$. By plugging the value of $\tau$, it is $\Big(\frac{2\bar \alpha_{t_0}C^2}{1-\bar \alpha_t}+C\sqrt{\frac{8\bar \alpha_{t_0}\log(1/\delta)}{1-\bar \alpha_{t_0}}},\delta\Big)$-DP. Since the guarantee is for each data point, we prove the same level of LDP for the creation of $\tilde D$. As a result, by the post processing property of differential privacy, the shared global denoiser $z_w$ is $\Big(\frac{2\bar \alpha_{t_0}C^2}{1-\bar \alpha_t}+C\sqrt{\frac{8\bar \alpha_{t_0}\log(1/\delta)}{1-\bar \alpha_{t_0}}},\delta\Big)$-LDP. 
    
    Furthermore, we can also provide the privacy guarantees for each element in $x$ (e.g., each pixel in a given image). In this case, we only need to replace $C$ with $c$, where $c=\max_{i\in[d]}x[i]$ and $x[i]$ is the $i$-the coordinate of $x$. As a result, the shared global denoiser $z_w$ is $\Big(\frac{2\bar \alpha_{t_0}c^2}{(1-\bar \alpha_t)}+c\sqrt{\frac{8\bar \alpha_{t_0}\log(1/\delta)}{(1-\bar \alpha_{t_0})}},\delta\Big)$-DP for each element in $x$.
\end{proof}

\end{document}